\newcommand{\RNum}[1]{\uppercase\expandafter{\romannumeral #1\relax}}
\numberwithin{equation}{section}
\titleformat{\section}[runin]{\bfseries}{\thesection.}{3pt}{}[.]
\begin{document}

\title[Analyzing Deviations of Dyadic Lines in Fast Hough Transform]%
{Analyzing Deviations of Dyadic Lines in Fast Hough Transform}

\author{Gleb Smirnov}
\address{School of Mathematics, 
University of Geneva, Rue du Conseil-G{\'e}n{\'e}ral, 7, 1205, Geneva, Switzerland}
\email{gleb.smirnov@unige.ch}

\author{Simon Karpenko}
\address{Institute for Information Transmission Problems, RAS, 
Bolshoi Karetnyi 19, 127051, Moscow, Russia}
\email{simon.karpenko@gmail.com}

\begin{abstract}
Fast Hough transform is a widely used algorithm in pattern recognition. The algorithm relies on approximating lines using a specific discrete line model called dyadic lines. The worst-case deviation of a dyadic line from the ideal line it used to construct grows as $O(log(n))$, where $n$ is the linear size of the image. But few lines actually reach the worst-case bound. The present paper addresses a statistical analysis of the 
deviation of a dyadic line from its ideal counterpart. Specifically, our findings show that the mean deviation is zero, and the variance grows as $O(log(n))$. As $n$ increases, the distribution of these (suitably normalized) deviations converges towards a normal distribution with zero mean and a small variance. This limiting result makes an essential use of ergodic theory.
\end{abstract}

\maketitle

\setcounter{section}{0}
\section{Dyadic lines}

The Radon transform (RT) maps a function $f$ defined on the plane $\rr^2$ to a function $R\! f$ defined on the (two-dimensional) space of lines in the plane. Specifically, if $f$ is function on $\rr^2$, $\Omega \subset \rr^2$ is the support of $f$ (image space), and $L \subset \rr^2$ is a line, then $R\! f(L)$ is given by the integral:
\[
R\! f(L) = \int_{L} f | d\,x |.
\]
In the context of digital image processing, a discrete form of the Radon transform is used. Various algorithms exist as discrete approximations of the continuous Radon transform. This paper focuses on one such discrete Radon transform, known as the Fast Hough transform (FHT). The FHT was originally proposed by Brady and Young \cite{Brady-Y}, independently by Vuillemin \cite{Vuill}, and also by G\"{o}tz-Druckm\"{u}ller \cite{Gtz-Druck}). The core idea behind FHT is to represent digital straight lines recursively, making it a highly efficient algorithm. Due to recursion, the total computational 
complexity is on the order $O(n^2 log(n))$, where $n$ is the size of the input image. Remarkably, the algorithm is suited for 
parallel computations and can be executed in $O(log(n))$ time using 
$O(n^2)$ processors. In this paper, we only focus on a key aspect of FHT: a specfic 
set of digital lines, referred to as dyadic lines (the term is borrowed from \cite{Ersh}). For a more detailed exploration of the algorithm itself, we refer the reader to \cite{Gtz-Druck} and \cite{Press}.
\smallskip%

It is important to note that our discussion here specifically pertains to the FHT algorithm introduced by Brady et al. However, it is worth mentioning that other algorithms, also bearing the name FHT, have been developed 
by H. Li, Lavin, Le Master, and Y. Li, Gan, as documented in \cite{Li, Li-Gan}. In this paper, we do not delve into the statistical analysis of these alternative algorithms, but we anticipate addressing them in future research.
\smallskip%

An image of linear size $n$ is represented as an $n \times n$-array $I$ of pixels, 
with indices ranging from $0$ to $n - 1$. Each pixel carries a non-negative 
number $f_{ij}$ representing the level of gray at that pixel. We define the 
dyadic line $D(t,x)$ connecting the origin $(0, 0)$ with the point $(n - 1, t)$ at the right border of the image. This dyadic line serves as a discrete 
approximation of the ideal line $y = t x/(n-1)$. For now, let us 
consider the case $0 \leq t \leq n - 1$, corresponding to the slopes from $0$ to $\pi/4$ (inclusive). We assume 
that $n = 2^p$ for some non-negative integer $p$. Instead of the original definition of 
$D(x, t)$, which involves a recursive process with dyadic lines on smaller images, we employ the novel analytic definition of $D(x,t)$ suggested in \cite{Ersh}. 
\smallskip%

First, we define basic dyadic lines, corresponding to the slopes 
$1, 2,\ldots, 2^{p-1}$. For each $i = 0, 1, \ldots, p - 1$, we set:
\begin{equation}\label{basic}
D(x, 2^{i}) = \left[ \dfrac{2^i\,x}{2^p - 1} \right],
\end{equation}
where $[\phantom{x}]$ represents rounding to the nearest integer.
\begin{center}
\begin{tikzpicture}[scale=0.7]

  \draw[gray!30] (0,0) grid (7,7);
  
  \draw[->] (0,0) -- (7,0) node[right] {$x$};
  \draw[->] (0,0) -- (0,7) node[above] {$t$};
  
  \draw[blue, thick] (0,0) -- (3,0) -- (4,1) -- (7, 1);
  \draw[blue] (3,5) node[above right] {$y = D(x, 1)$};
\end{tikzpicture}
\begin{tikzpicture}[scale=0.7]

  \draw[gray!30] (0,0) grid (7,7);
  
  \draw[->] (0,0) -- (7,0) node[right] {$x$};
  \draw[->] (0,0) -- (0,7) node[above] {$t$};
  
  \draw[blue, thick] (0, 0) -- (1,0) -- (2,1) -- (5,1) -- (6, 2) -- (7,2);
  \draw[blue] (3,5) node[above right] {$y = D(x, 2)$};
  
\end{tikzpicture}
\begin{tikzpicture}[scale=0.7]

  \draw[gray!30] (0,0) grid (7,7);
  
  \draw[->] (0,0) -- (7,0) node[right] {$x$};
  \draw[->] (0,0) -- (0,7) node[above] {$t$};
  
  \draw[blue, thick] (0, 0) -- (1,1) -- (2,1) -- (3,2) -- (4, 2) -- (5,3) -- (6, 3) -- (7, 4);
  \draw[blue] (3,5) node[above right] {$y = D(x, 4)$};
\end{tikzpicture}

\smallskip%

$n = 8$. Basic dyadic lines.
\end{center}

If we let $t_{p-1} t_{p-2} \ldots t_{0}$ be the binary 
representation of $t$, then $D(x, t)$ is defined as follows:
\begin{equation}\label{dyadic}
D(x, t) = \sum_{i = 0}^{p-1} t_i\, D(x, 2^{i}).
\end{equation}
\begin{center}
\begin{tikzpicture}[scale=0.7]
\node[text width = 9cm] at (-7.5,3.5) 
  {$n = 8$. dyadic lines $D(x, 7)$ and $D(x, 6)$.\\ 
  \smallskip%

  \noindent
  The two lines share 
  four pixels, which constitute a smaller line connecting 
  $(0,0)$ and $(3,3)$. When computing the sums of $f_{ij}$ for each line, FHT 
  saves time by calculating the shared partial sums only once.};

  \draw[gray!30] (0,0) grid (7,7);
  
  \draw[->] (0,0) -- (7,0) node[right] {$x$};
  \draw[->] (0,0) -- (0,7) node[above] {$t$};
  
  \draw[blue, thick] (0,0) -- (7,7) node[below right] {$y = D(x, 7)$};
  
  \draw[red, thick] (3, 3) -- (4,3) -- (7,6) node[below right] {$y = D(x, 6)$};
  
  \fill[black] (0,0) circle (2pt) node[below left] {$(0,0)$};
  \fill[black] (7,7) circle (2pt) node[above right] {$(7,7)$};
\end{tikzpicture}
\end{center}
In order to approximate a line with intercept, $y = t x/(n-1) + h$, we use the the dyadic line given by $D(t, x) + h$. If $h \neq 0$, then some lines fall off the top or bottom edges of the image. In order to handle those, the image is appropriately padded with zeros. Further, one defines dyadic lines for other slopes by appropriately flipping the original image.
\smallskip%

For each dyadic line on the image, FHT calculates 
the sum of $f_{ij}$ over the array points of that dyadic line. To assess the accuracy of FHT, we must understand how well a dyadic line approximates its ideal counterpart.

\section{Main results} 
The objective of this paper is to investigate the deviation of dyadic lines from their corresponding ideal lines. To simplify the exposition, we restrict ourselves to the pencil of dyadic lines passing through the origin. As before, we let $D(x,t)$ denote the dyadic line connecting 
the origin $(0,0)$ with the point $(n - 1, t)$, where $n = 2^{p}$ is 
the size of our digital image.  The deviation of $D(x,t)$ from 
the ideal line $y = t x/(n-1)$ at the pixel $(x,t)$ is expressed as:
\[
E(x,t) = D(x,t) - \dfrac{t x}{n - 1}
\]
Empirical studies (see \cite{Brady-Y, Gtz-Druck, Ersh}) indicate that the 
worst case deviation does not exceed $p/6$. Karpenko and Ershov provided a combinatorial 
proof of this estimate in \cite{Ersh-Karp}, confirming the following result: 
\begin{proposition}\label{up-bound}
$|E(x, t)| \leq p/6$, and the bound is sharp if $p$ is even.
\end{proposition}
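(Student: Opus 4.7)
The plan is to reduce Proposition \ref{up-bound} to a sharp bound for orbits of the doubling map $T(v)=2v\bmod 1$ on $[0,1)$, and to prove that bound via an explicit telescoping potential. By the linearity of (\ref{dyadic}) in the binary digits $t_i$ of $t$, setting $u:=x/(2^p-1)$ one has
\[
E(x,t)=\sum_{i=0}^{p-1}t_i\,e_i(x),\qquad e_i(x):=[2^i u]-2^i u.
\]
Since $t_i\in\{0,1\}$, the maximum of $|E(x,t)|$ over $t$ equals $\max\bigl(\sum_i e_i^+,\sum_i e_i^-\bigr)$. Writing the binary digits of $x$ as $x=\sum_i a_i 2^{p-1-i}$, the value $v_i:=\{2^i u\}$ is the $i$-fold cyclic shift of the periodic expansion $0.\overline{a_0\cdots a_{p-1}}$, so $v_i<1/2$ exactly when $a_i=0$; correspondingly $e_i$ is $-v_i$ or $1-v_i$. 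A swap of summations gives $\sum_i v_i=\#\{i:a_i=1\}$, whence $\sum_i e_i=0$. Therefore the two maxima coincide and both equal $\tfrac12\sum_{i=0}^{p-1}d(v_i)$, where $d(v):=\min(v,1-v)$. Since $2^p u\equiv u\pmod 1$, the sequence $(v_0,\dots,v_{p-1})$ is a periodic orbit of $T$, and the proposition reduces to
\[
\sum_{i=0}^{p-1}d(v_i)\;\le\;p/3.
\]

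The heart of the argument is the pointwise telescoping inequality
\[
d(v)\le g(v)-g(Tv)+\tfrac13,\qquad g(v):=v(1-v),\qquad v\in[0,1).
\]
A case split at $v=1/2$ reduces it to $(3v-1)^2\ge 0$ on $[0,1/2)$ and $(3v-2)^2\ge 0$ on $[1/2,1)$. Summing along the orbit, the $g$-terms cancel because $v_p=v_0$, yielding $\sum d(v_i)\le p/3$ and hence $|E(x,t)|\le p/6$.

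For sharpness when $p$ is even, choose $u=1/3$; then $x=(2^p-1)/3\in\zz$, the orbit is the two-cycle $\{1/3,2/3\}$, both perfect-square inequalities are equalities, and taking $t$ with $t_i=1$ exactly at odd $i$ (that is, $t=2(2^p-1)/3$) produces $E(x,t)=p/6$. The main obstacle is guessing the potential $g$: one is guided by requiring tightness on the conjectured extremizer $\{1/3,2/3\}$, which forces a quadratic $g$; the fortunate outcome is that with $g(v)=v(1-v)$ the residuals collapse to perfect squares, so the inequality holds on the entire interval and not only along the extremal orbit.
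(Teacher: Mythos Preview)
Your proof is correct and takes a genuinely different route from the paper's. Both arguments begin with the same reduction: maximizing over $t$ picks out the positive $e_i$'s, and the identity $\sum_i e_i=0$ (your digit-sum computation, the paper's \eqref{dr-vanish}) turns the problem into bounding $\sum_i |e_i|=\sum_i d(v_i)$. From there the paper casts the question as minimizing a quadratic form over the vertices of a hypercube, identifies the associated matrix as circulant, and computes its minimal eigenvalue via the M\"obius map $f(\lambda)=1/(2-\lambda)$ evaluated at $p$th roots of unity; the minimum $1/3$ is attained at $\lambda=-1$, available precisely when $p$ is even. You instead treat $(v_0,\dots,v_{p-1})$ as a periodic orbit of the doubling map and prove the cohomological inequality $d(v)\le g(v)-g(Tv)+\tfrac13$ with $g(v)=v(1-v)$, which telescopes along the orbit. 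Your argument is shorter and entirely elementary---no linear algebra, just the two perfect squares $(3v-1)^2\ge0$ and $(3v-2)^2\ge0$---and it sits naturally in the framework of ergodic optimization, where one seeks a potential making the extremal orbit tight. The paper's approach, by contrast, is more systematic: once the quadratic form is written down, the circulant structure hands you all eigenvalues at once, and the same machinery would in principle locate extrema for related functionals without needing a new inspired choice of potential.
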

In this note, we offer a novel proof of this estimate, 
relying on properties of circulant matrices. See \S\,\ref{sec-max}. 
Moreover, empirical evidence, as demonstrated in \cite{Gtz-Druck}, suggest that 
small deviations are more likely to occur than large ones. 
As shown in \cite{Ersh-Karp}, $E(x,t)$ obeys the following symmetry:
\begin{equation}\label{symmetry}
E(x, t) = - E(2^p - 1 - x, t).
\end{equation}
Consequently, when we randomly select points $(x, t)$ from the image, 
the mean of $E(x, t)$ is equal to $0$. However, knowing that $E(x, t)$ has 
an expected value of $0$ 
is insufficient to estimate how 
it deviates from zero; additional information, such as the variance, is required. 
In \S\,1 below, we calculate the variance $E(x, t)$ by means 
of simple algebraic arguments, and we establish:
\begin{proposition}\label{var}
In a uniform distribution on the 2D image $I$ of linear size $n = 2^p$, when each 
point $(x, t)$ of $I$ has an equal chance of being selected, the expected 
value of the function $E(x,t)$ equals zero and the variance is expressed as:
\begin{equation}\label{48var}
\mathrm{Var} E(x,t) = \dfrac{p}{48}\left( 1 - \dfrac{1}{2^p - 1} \right).
\end{equation}
\end{proposition}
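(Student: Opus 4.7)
The plan is to linearize $E(x,t)$ via the additivity formula \eqref{dyadic} and then take expectations separately over the digits of $t$ and over $x$. Set $e_i(x) := D(x, 2^i) - 2^i x/(2^p - 1)$, so that $E(x,t) = \sum_{i=0}^{p-1} t_i\, e_i(x)$. When $t$ is uniformly distributed on $\{0, 1, \ldots, 2^p - 1\}$, its binary digits $t_0, \ldots, t_{p-1}$ are independent Bernoulli$(1/2)$ random variables; in particular $\mathbb{E}[t_i] = \mathbb{E}[t_i^2] = 1/2$ and $\mathbb{E}[t_i t_j] = 1/4$ for $i \neq j$. Conditioning on $x$ and expanding the square yields
\[
\mathbb{E}_t[E(x,t)] = \tfrac12 \sum_i e_i(x), \qquad \mathbb{E}_t\bigl[E(x,t)^2\bigr] = \tfrac14 \sum_i e_i(x)^2 + \tfrac14 \Bigl( \sum_i e_i(x) \Bigr)^{\!2}.
\]

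The next step is to establish the identity $\sum_{i=0}^{p-1} e_i(x) \equiv 0$, which simultaneously forces $\mathbb{E}[E] = 0$ and eliminates the cross term above. Since $\sum_i 2^i = 2^p - 1$, one has $\sum_i 2^i x/(2^p-1) = x$, so this identity is equivalent to $\sum_i D(x, 2^i) = x$, which by \eqref{dyadic} says that the dyadic line of slope $2^p - 1$ coincides with the diagonal itself, $D(x, 2^p - 1) = x$. The latter is easily verified directly, for instance via a telescoping argument on the orbit of $r \mapsto 2r \bmod (2^p - 1)$ started at $x$. With this cancellation, the problem reduces to evaluating $\frac{1}{4 \cdot 2^p} \sum_{x=0}^{2^p - 1} \sum_i e_i(x)^2$.

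Setting $N := 2^p - 1$, each $e_i(x)$ depends only on the residue $r = 2^i x \bmod N$, and equals minus the signed distance from $r/N$ to the nearest integer. Since $\gcd(2, N) = 1$, multiplication by $2^i$ is a bijection of $\{1, \ldots, N-1\}$ for every $i$, so $\sum_{x=0}^{N} e_i(x)^2$ is independent of $i$ and equals
\[
\frac{1}{N^2} \sum_{r=1}^{N-1} \min(r,\, N-r)^2 \;=\; \frac{2}{N^2} \sum_{s=1}^{(N-1)/2} s^2 \;=\; \frac{N^2 - 1}{12 N},
\]
by the closed form $\sum_{s=1}^m s^2 = m(m+1)(2m+1)/6$ with $m = (N-1)/2$. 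Multiplying by $p$ (the number of terms $e_i$) and by $\frac{1}{4 \cdot 2^p}$, and then using $N + 1 = 2^p$, yields exactly \eqref{48var}.

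I expect the main conceptual step to be the vanishing identity $\sum_i e_i(x) = 0$; without it the cross terms would survive, and the answer would not collapse to such a clean expression. The remaining ingredients --- independence of the binary digits under the uniform law on $t$, bijectivity of multiplication by $2$ modulo $N$, and the elementary sum of squares --- are routine.
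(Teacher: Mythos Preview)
Your proof is correct and follows the same overall skeleton as the paper's: decompose $E(x,t)=\sum_i t_i\,e_i(x)$, average out the independent Bernoulli digits $t_i$, kill the cross term via the identity $\sum_i e_i(x)=0$, and then evaluate $\mathbb{E}[e_i(x)^2]$. The differences lie in how the two nontrivial steps are carried out.

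For the vanishing identity, the paper argues abstractly: it extends $E_i$ to a linear function $\ell(P^i x)$ on $\rr^p$, observes that $\sum_i \ell(P^i x)$ is a $P$-invariant linear form and hence a scalar multiple of $\sum_i x_i$, and then evaluates at $\mathbf{1}$ to see the scalar is zero. Your route is more arithmetic: you recast the identity as $D(x,2^p-1)=x$ and verify it by telescoping along the doubling orbit modulo $N=2^p-1$ (indeed $[2^i x/N]=q_{i+1}-q_i$, so the sum collapses to $q_p-q_0=x$). Your argument is self-contained and avoids the linear-algebra setup.

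For the second moment, the paper introduces a general trace lemma (Lemma~\ref{trace}) computing the average of a quadratic form over hypercube vertices, and applies it to $\ell(x)^2$. You instead exploit directly that multiplication by $2^i$ permutes $\{1,\dots,N-1\}$, so $\sum_x e_i(x)^2$ is the same for every $i$ and reduces to the elementary sum $\frac{2}{N^2}\sum_{s=1}^{(N-1)/2}s^2$. This is shorter and more concrete; the paper's trace lemma, on the other hand, fits the circulant-matrix framework used elsewhere in the paper (notably for Proposition~\ref{up-bound}), so its extra generality is not wasted there.
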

With a variance as small as that in \eqref{48var}, 
even the basic Markov's inequality can provide useful estimates. 
For instance, we obtain:
\[
\mathbb{P}\left[ E(x,t)^2 \geq 1 \right] < \dfrac{p}{48}.
\]
Consequently, for $p = 12$, more than 75\% of points have a deviation less than one pixel. 
\smallskip%

We also 
provide additional insights into the distribution of $E(x, t)$ beyond just its mean and variance. Specifically, we establish the following statement:
\begin{proposition}\label{norm}
As $p \to \infty$, the distribution of $E(x, t)$ approaches normality, i.e., for each $a \in \rr$, 
\[
\lim_{p \to \infty} \sup_{a} \left| 
\mathbb{P} \left[ \sigma^{-1} E(x, t)/\sqrt{p} < a \right] - 
\Phi(a)
\right| = 0,\quad \sigma^{-1} = \sqrt{48}, 
\]
where $\Phi(a)$ is the standard normal cdf evaluated at $a$. 
\end{proposition}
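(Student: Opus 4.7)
The plan is to recast $E(x,t)$ as a weighted Rademacher sum over the bits of $t$, apply a quantitative central limit theorem conditional on $x$, and then use ergodicity of the doubling map to control the conditional variance. Let $\phi(y) := y - [y]$ be the signed distance from $y$ to the nearest integer, and set $\phi_i(x) := \phi\bigl(2^i x/(2^p-1)\bigr)$. Expanding $t$ in binary in the definition of $D(x,t)$ and subtracting the ideal line gives
\[
E(x,t) \;=\; -\sum_{i=0}^{p-1} t_i\, \phi_i(x).
\]
Because the line $y=x$ is reproduced exactly by $D(x,2^p-1)=x$, we have the identity $\sum_{i=0}^{p-1}\phi_i(x) = 0$. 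Writing $t_i = (1+\epsilon_i)/2$ with $\epsilon_i \in \{-1,+1\}$ and applying this identity gives the centred form $E(x,t) = -\tfrac{1}{2}\sum_i \epsilon_i\,\phi_i(x)$. Under the uniform measure on $(x,t)$, the $\epsilon_i$ are i.i.d.\ Rademacher variables, independent of $x$.

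Condition on $x$. Then $E(x,t)$ is a weighted sum of bounded, independent, centred variables with conditional variance $\sigma_p^2(x) := \tfrac{1}{4}\sum_i \phi_i(x)^2$. Since $|\phi_i| \le 1/2$, a direct application of Berry--Esseen yields
\[
\sup_{a \in \rr}\left|\mathbb{P}\!\left[ E(x,t)/\sigma_p(x) < a \,\big|\, x \right] - \Phi(a)\right| \;\le\; C/\sigma_p(x),
\]
so whenever $\sigma_p^2(x) \gtrsim p$, this conditional cdf is uniformly $O(1/\sqrt{p})$-close to $\Phi$. The second step is to show that $\sigma_p^2(x)/p \to 1/48$ in probability as $p \to \infty$. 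Setting $\alpha = x/(2^p-1)$ and letting $T: y \mapsto 2y \bmod 1$ be the doubling map, one has $\phi_i(x) = \phi(T^i\alpha)$, so $4\sigma_p^2(x)/p$ is precisely the Birkhoff average of $\phi^2$ along the $T$-orbit of $\alpha$. Since $T$ is ergodic with respect to Lebesgue measure and $\int_0^1 \phi(y)^2\,dy = 1/12$, the generic limit of this average is $1/12$, exactly matching the value in Proposition \ref{var}.

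Combining the two ingredients via Slutsky's theorem closes the argument: on the set of $x$ where $\sigma_p(x)/\sqrt{p}$ is within $\delta$ of $1/\sqrt{48}$ (a set of probability $\to 1$), the Berry--Esseen estimate gives uniform-in-$a$ control on the conditional cdf of $\sqrt{48}\,E(x,t)/\sqrt{p}$; the complementary set contributes at most its own (vanishing) measure to the unconditional Kolmogorov distance. The principal obstacle is the concentration claim $\sigma_p^2(x)/p \to 1/48$: one must upgrade the qualitative pointwise Birkhoff theorem on $[0,1]$ into a quantitative variance bound $\mathrm{Var}_x\bigl(\sigma_p^2(x)\bigr) = o(p^2)$ over the atomic lattice $\{k/(2^p-1) : 0 \le k < 2^p\}$. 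I expect this to be done via the Fourier expansion $\phi(y) = \sum_{k \ge 1} \tfrac{(-1)^{k+1}}{\pi k}\sin(2\pi k y)$ and estimation of the resulting exponential sums over $T$-orbits, combined with the observation that the $x$'s whose $T$-orbit has length strictly less than $p$ number at most $\sum_{d \mid p,\,d < p} 2^d$, a vanishing fraction of $2^p$.
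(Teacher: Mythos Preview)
Your proposal is correct in its overall architecture and takes a genuinely different route from the paper's. The paper works entirely with characteristic functions and Nagaev's spectral method: after averaging out the $t_i$'s and replacing the discrete lattice $\{k/(2^p-1)\}$ by Lebesgue measure on $[0,1]$ via an explicit Riemann-sum error bound, it writes $\varphi_p(\xi)=\int L_{\xi/\sqrt{p}}^{\,p}\,1$ for a twisted transfer operator $L_\xi$ of the doubling map and invokes analytic perturbation of the leading eigenvalue to obtain the Gaussian limit. Your approach instead conditions on $x$, applies Berry--Esseen to the resulting Rademacher sum, and then proves that the conditional variance $\sigma_p^2(x)/p$ concentrates at $1/48$.

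Your route is more elementary---no transfer operators, and it even gives an explicit rate in the Berry--Esseen step---but it shifts all of the dynamical content into the concentration estimate $\mathrm{Var}_x\bigl(\sigma_p^2(x)\bigr)=o(p^2)$, which you rightly flag as the crux. That step is indeed fillable, and in fact more directly than via the Fourier route you sketch: since $\phi_i(x)=E_0(P^ix)$ and the dependence of $E_0$ on the bit $x_k$ is of size $2^{k}/(2^p-1)$, the covariance $\mathrm{Cov}_x(\phi_i^2,\phi_j^2)$ decays geometrically in the cyclic distance $|i-j|\bmod p$, giving $\sum_{i,j}\mathrm{Cov}(\phi_i^2,\phi_j^2)=O(p)$. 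The paper's method, by contrast, packages the mixing and the CLT into a single spectral-gap statement, at the price of heavier machinery and of having to check by hand that the twisted operator \eqref{explicit} acts boundedly on $C^1$ despite the discontinuity of $f$.
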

This result follows from the Central Limit Theorem for dynamical systems, 
and we provide a sketch of the proof in \S\,3. 
However, it is important to note that until a thorough analysis of 
the convergence rate is conducted, 
the practical utility of this statement remains uncertain. 
\smallskip%

All results above rely 
on the following analytic representation of $E(x, t)$, 
as suggested in \cite{Ersh-Karp}. 
Using the definition of $D(x, t)$ provided 
in \eqref{dyadic}, we can calculate:
\[
E(x, t) = D(x,t) - \dfrac{t x}{2^p - 1} = 
\sum_{i = 0}^{p-1} t_i D(x, 2^{i}) - \sum_{i = 0}^{p-1} t_i 
\left( \dfrac{2^{i}\, x}{2^{p} - 1} \right) \stackrel{\eqref{basic}}{=} 
\sum_{i = 0}^{p-1} t_{i} 
\left( \left[ \dfrac{2^{i}\, x}{2^p - 1} \right] - \dfrac{2^{i}\, x}{2^p - 1} \right).
\]
Now, let us define:
\[
E_i(x) = \left[ \dfrac{2^{i}\, x}{2^p - 1} \right] - \dfrac{2^{i}\, x}{2^p - 1}.
\]
We can express $E(x, t)$ as follows:
\[
E(x, t) = \sum_{i = 0}^{p - 1} t_i\,E_{i}(x, t).
\]
If we write:
\[
\dfrac{x}{2^p - 1} = 0.(x_{p-1} x_{p-2} \ldots x_{0}),
\]
where the right-hand side represents a repeating binary fraction, 
then $E_{0}(x)$ takes 
the form: 
\begin{equation}\label{E0}
E_{0}(x) = 
\left[ 0.(x_{p-1} x_{p-2} \ldots x_{0}) \right] - 0.(x_{p-1} x_{p-2} \ldots x_{0}) = 
x_{p-1} - 0.(x_{p-1} x_{p-2} \ldots x_{0}).
\end{equation}
Let $P$ be the circular shift of the binary number $x_{p-1} x_{p-2} \ldots x_{0}$ that 
moves the last bit to the first position while shifting 
all other bits to the next position, i.e.,
\[
P(x_{p-1} x_{p-2} \ldots x_{0}) = x_{p-2} x_{p-3} \ldots x_{p-1}.
\]
Using the operation $P$, we express $E_i(x)$ as follows:
\begin{equation}\label{Ei}
E_{i}(x) = \left[ 2^{i}\,\cdot\,0.(x_{p-1} x_{p-2} \ldots x_{0}) \right] - 
2^{i}\,\cdot\, 0.(x_{p-1} x_{p-2} \ldots  x_{0}) = E_0(P^{i}(x)).
\end{equation}
Here, $P^{i}$ is the $i$th power of $P$.

\section{Proof of Proposition \ref{up-bound}}\label{sec-max} 
Similar to the proof by Karpenko and Ershov, 
we begin with equations \eqref{E0} and \eqref{Ei} and state 
this problem as a combinatorial optimization problem 
that involves minimizing a function defined on $p$-bit binary strings. 
However, we then follow a different route and cast this problem as a 
continuous optimization problem. 
\smallskip%

Note that substituting $2^{p} - 1 - x$ for $x$ changes the sign of $E(x, t)$ 
while maintaining its absolute value. Therefore, it suffices to show 
that $E(x,t) \leq p/6$, and that the bound is sharp for $p$ even. To begin with, we aim to maximize 
the values of $E(x, t)$ for a given $x$, 
finding the optimal value of $t$ that achieves this maximum. 
If we express $E(x, t)$ as: 
\[
E(x, t) = \sum_{i = 0}^{p - 1} t_i\,E_{i}(x),
\]
then the optimal choice of $t$ is as follows. For each $i = 0, \ldots, p -1$,  
\[
t_i = \begin{cases}
0 & \text{if } E_i(x) < 0 \\
1 & \text{if } E_i(x) \geq 0
\end{cases},
\]
which, after using \eqref{E0} and \eqref{Ei}, becomes $t_i = x_{p-1 - i}$. Therefore, it 
suffices to show the function
\begin{multline}
q(x) = \sum_{i = 0}^{p-1} x_{p-1 - i}\,E_i(x) = \sum_{i = 0}^{p - 1} x_{p-1 - i}\, 
\left( x_{p-1 - i} - 0.(x_{p-1 - i} x_{p-1-i-1}\ldots) \right)\\
= 
\sum_{i = 0}^{p - 1} x_{p-1 - i}\, 
\left( 1 - 0.(x_{p-1 - i} x_{p-1-i-1}\ldots) \right) = 
\sum_{i = 0}^{p - 1} x_{i} - 
\sum_{i = 0}^{p - 1} x_{p-1 - i} \cdot 0.(x_{p-1 - i} x_{p-1-i-1}\ldots)
\end{multline}
is upper-bounded by $p/6$, and that the bound is sharp if $p$ even. 
\smallskip%

Let us treat $x_{p-1} x_{p-2} \ldots x_{0}$ as vector 
$x = (x_0, x_1, \ldots, x_{p-1}) \in \rr^p$. 
Let $\ell$ be the 
linear function on $\rr^p$ defined as:
\[
\ell(x) = \sum_{i = 0}^{p-1} 2^{i} x_i.
\]
If the coordinates of the vector $x$ as 1's and 0's, then 
$(2^{p}-1)^{-1} \ell(x)$ calculates the decimal value of $0.(x_{p-1 - i} x_{p-1-i-1}\ldots)$. 
\smallskip%

From now on, we use $x$ to denote a vector in $\rr^n$ rather than the decimal value of 
$x_{p-1} x_{p-2}\ldots x_{0}$.

\smallskip%
Let $P \colon \rr^p \to \rr^p$ be the linear 
transformation that cyclically permutes the coordinates, i.e., 
\begin{equation}\label{P-lin}
P\colon 
(x_0, x_1, \ldots, x_{p-1}) \to (x_{p-1}, x_0, \ldots, x_{p-2}).
\end{equation}
The corresponding 
matrix, also denoted by $P$, 
is a cyclic permutation matrix with the structure:
\[
\begin{bmatrix}
  0 & 0 & 0 & \cdots & 1 \\
  1 & 0 & 0 & \cdots & 0 \\
  0 & 1 & 0 & \cdots & 0 \\
  \vdots & \vdots & \vdots & \ddots & \vdots \\
  0 & 0 & 0 & \cdots & 0 \\
\end{bmatrix}.
\]
Using $\ell$ and $P$, 
we extend $E_{i}(x)$ and $q(x)$ 
from the set binary strings to the entire space $\rr^p$. We get:
\[
q(x) = \sum_{i = 0}^{p - 1} x_{i} - 
(2^{p}-1)^{-1} \sum_{i = 0}^{p - 1} x_{p-1 - i} \ell(P^{i} x).
\]
After the change of variables $y_i = x_i - 1/2$, we get:
\[
q(y) = \dfrac{p}{4} - (2^{p}-1)^{-1} \sum_{i = 0}^{p - 1} y_{p-1 - i} \ell(P^{i} y) = 
\dfrac{p}{4} - (2^{p}-1)^{-1} a(y).
\]
Let us consider the unit hypercube in $\rr^p$, centered at $y = 0$, whose vertices have the $y$-coordinates: 
\[
\left( \pm \dfrac{1}{2}, \pm \dfrac{1}{2}, \ldots , \pm \dfrac{1}{2} \right). 
\]
Let $V$ denote the set of these vertices. To complete the proof, it suffices 
to show that 
\[
\min_{y \in V} a(y) \geq (p/12) \left( 2^p - 1 \right), 
\]
with equality when $p$ is even. Let 
$S$ be the unit sphere in $\rr^p$ centered at $y = 0$. 
Since $a$ is a quadratic form, 
and every point of $V$ 
lies on the sphere of radius $\sqrt{p}/2$, it follows that 
\[
\min_{y \in V} a(y) \geq (p/4) \min_{y \in S} a(y).
\]
Let us show that the minimum value of $a$ on $S$ is indeed 
$\left( 2^p - 1 \right)/3$ and that if $p$ even, the minimum point belongs 
to $V$. To this end, we use the following identity:
\[
a(y) = \langle A\,y, y \rangle,
\]
where $\langle \phantom{a}, \phantom{a} \rangle$ stands 
for the standard inner product on $\rr^n$, and the matrix $A$ is given by
\[
A = \begin{bmatrix}
  2^{p-1} & 1 & \cdots & 2^{p-2} \\
  2^{p-2} & 2^{p-1} & \cdots & 2^{p-3} \\
  2^{p-3} & 2^{p-2} & \cdots & 2^{p-4} \\
  \vdots & \vdots & \ddots & \vdots \\
  1 & 2 & \cdots & 2^{p-1} \\
\end{bmatrix}.
\]
This is indeed so, but the calculations are lengthy and 
we omit them. 
\smallskip%

If $A$ were symmetric, then the 
minimum value of 
$\langle A\,y, y \rangle$ on the unit sphere $S$ would be 
precisely the minimum eigenvalue of $A$, with the 
minimizer being the corresponding eigenvector of $A$. In our case, when 
$A$ is not symmetric, we apply the same argument to the symmetric matrix 
$(A + A^{T})/2$, which corresponds to the same quadratic form. 
\smallskip%

The matrix $A$ is a circulant matrix (see, e.g., \cite{Kra}) with the following structure:
\begin{equation}\label{poly}
A = 2^{p-1}\, P^{0} + 2^{p-2}\, P + 2^{p-3}\, P^2 + \ldots + 2^{0}\, P^{p-1}.
\end{equation}
It is known that the 
eigenvalues of $P$ are the $p$th roots of unity. 
If $\lambda$ is an eigenvalue of $P$ and 
$v$ is the corresponding (complex) 
eigenvector of $P$, then $v$ is also 
an eigenvector of $A$, associated with the eigenvalue:
\begin{multline}
2^{p-1} + 2^{p-2}\,\lambda + 2^{p-3}\,\lambda^2 + \ldots + \lambda^{p-1} = 
2^{p-1} 
\left( 1 + \dfrac{\lambda}{2} + \dfrac{\lambda^2}{4} + \ldots + 
\dfrac{\lambda^{p-1}}{2^{p-1}}  \right) = \\
 = \dfrac{2^{p} - \lambda^p}{2 - \lambda}  = 
\dfrac{2^{p} - 1}{2 - \lambda} = (2^{p} - 1) f(\lambda).
\end{multline}
On the other hand, if $A$ is circulant, then so is $A^t$. If 
$v$ is an eigenvector of $A$, associated with the eigenvalue $(2^{p} - 1) f(\lambda)$, then 
$v$ is also an eigenvector of $A^{t}$, associated with 
$(2^{p} - 1) \overline{f(\lambda)}$. Consequently, 
the eigenvalues of $(A + A^{T})/2$ all share the form 
$(2^{p} - 1) ( f(\lambda) + \overline{f(\lambda)} )$, where 
$\lambda^p = 1$. We seek 
to determine the minimal eigenvalue, i.e.,
\[
\min_{\lambda^p = 1} (2^{p} - 1) ( f(\lambda) + \overline{f(\lambda)} ) = 
(2^{p} - 1) \min_{\lambda^p = 1}  \mathrm{Re}\, f(\lambda).
\]
We find:
\begin{equation}\label{f}
\min_{\lambda^p = 1}  \mathrm{Re}\, f(\lambda) \geq  
\min_{|\lambda| = 1}  \mathrm{Re}\, f(\lambda) = f(-1) = 1/3.
\end{equation}
To bound $f$ on the unit circle, we argue as follows. 
Observe that $f$ is a M\"{o}bius transformation with real coefficients. 
As such, it sends the unit circle 
to the circle that passes through the points $f(-1) = 1/3$ and $f(1) = 1$ and 
that is also symmetric with respect to the 
reflection $\lambda \to \overline{\lambda}$. The leftmost point of this 
circle is indeed $1/3$. When $p$ is even, $(-1)$ is 
a $p$th root of unity, and the 
inequality in \eqref{f} becomes an equality. This completes the proof. \qed

\section{Proof of Proposition \ref{var}} 
To discuss the mean and variance of the function 
$E(x, t)$, we need to introduce a 
probability distribution over the image set $I$. 
A natural approach is to assign 
an equal probability of 
$2^{-p} \cdot 2^{-p}$ to each point $(x, y) \in I$, 
where $2^p$ is the linear size of $I$. For a number $t$, we consider its binary 
representation as $t_{p-1} t_{p-2} \ldots t_0$. 
Assuming that $t$ is uniformly distributed 
between $0$ and $2^p - 1$, each bit $t_i$ becomes a Bernoulli variable 
(akin to a fair coin) with a probability of $0.5$ of being either $0$ or $1$. 
Furthermore, if treated as random variables, 
these bits $t_i$ are independent of each other. 
The same properties of independence and fairness hold for the binary representation of $x$. 
We can now treat $E(x, t)$ as a random variable and proceed to calculate its first two moments, which are expressed as:
\[
\mathbb{E}\left[ E(x, t) \right] = 
\mathbb{E} \left[ \sum_{i = 0}^{p-1} t_i\,E_i(x) \right],\quad 
\mathbb{E}\left[ E(x, t)^2 \right] = 
\mathbb{E} \left[ \left( \sum_{i = 0}^{p-1} t_i\,E_i(x) \right)^2 \right].
\]
Since $t_i$ and $E_j(x)$ 
are mutually independent, these expressions simplify to:
\begin{equation}\label{moments-1}
\mathbb{E}\left[ E(x, t) \right] = 
(1/2)\,\mathbb{E} \left[ \sum_{i = 0}^{p-1} E_i(x) \right],\quad 
\mathbb{E}\left[ E(x, t)^2 \right] = 
(1/4)\,\mathbb{E} \left[ 
\sum_{i = 0}^{p-1} E_i(x)^2 + \left( \sum_{i = 0}^{p-1} E_i(x) \right)^2
\right].
\end{equation}
Let us again treat $x_{p-1} x_{p-2} \ldots x_{0}$ as vector 
$x = (x_0, x_1, \ldots, x_{p-1}) \in \rr^p$. 
Let $P \colon \rr^p \to \rr^p$ be defined by \eqref{P-lin}, and 
$\ell$ be defined 
as follows:
\[
\ell(x) = x_{p-1} - (2^p - 1)^{-1}\sum_{i = 0}^{p-1} 2^{i} x_i.
\]
This definition of $\ell$ differs 
from the one used in \S\,\ref{sec-max}. Setting 
\[
E(x, t) = \sum_{i = 0}^{p-1} t_i\,\ell(P^{i}\,x),
\]
we extend $E(x, t)$ to the function on the entire $\rr^p$. Then 
\eqref{moments-1} becomes:
\begin{equation}\label{moments-2}
\mathbb{E}\left[ E(x, t) \right] = 
(1/2)\,\mathbb{E} \left[ \sum_{i = 0}^{p-1} \ell(P^{i}\,x) \right],\quad 
\mathbb{E}\left[ E(x, t)^2 \right] = 
(1/4)\,\mathbb{E} \left[ 
\sum_{i = 0}^{p-1} \ell(P^{i}\,x)^2 + \left( \sum_{i = 0}^{p-1} \ell(P^{i}\,x) \right)^2
\right],
\end{equation}
where $x$ is uniformly distributed 
among the vectors whose coordinates are $1$'s and $0$'s. 
Let us show that for each $x \in \rr^p$, 
\begin{equation}\label{dr-vanish}
\sum_{i = 0}^{p-1} \ell(P^{i}\,x) = 0.
\end{equation}
To this end, we observe the left-hand side of 
\eqref{dr-vanish} is a linear 
function on $\rr^p$ that is also $P$-invariant. Therefore, we mush have 
\begin{equation}\label{cyclic}
\sum_{i = 0}^{p-1} \ell(P^{i}\,x) = \alpha \sum_{i = 0}^{p-1} x_i\quad 
\text{for some $\alpha \in \rr$.}
\end{equation}
On the other hand, we observe that 
\begin{equation}\label{l1}
\ell(\mathbf{1}) = 0,\quad \text{where 
$\mathbf{1} = (1, 1, \ldots, 1)$.}
\end{equation}
It is clear that for each $i = 0, \ldots, p - 1$, 
$\ell(P^{i}\,\mathbf{1}) = \ell(\mathbf{1}) = 0$. 
Combining this with \eqref{cyclic}, we find that $\alpha = 0$. Note 
that we also get:
\[
\sum_{i = 0}^{p-1} \ell(P^{i}\,x) = - \sum_{i = 0}^{p-1} \ell(P^{i}(\mathbf{1} - x)),
\]
which implies equation \eqref{symmetry}.
\smallskip%

Having established \eqref{dr-vanish}, we write \eqref{moments-2} as follows:
\[
\mathbb{E}\left[ E(x, t) \right] = 0,\quad 
\mathbb{E}\left[ E(x, t)^2 \right] = 
(1/4)\,\mathbb{E} \left[ 
\sum_{i = 0}^{p-1} \ell(P^{i}\,x)^2 
\right]
\]
Since $\ell(P^{i}\,x)$ are identically distributed, we get:
\begin{equation}\label{moments-3}
\mathbb{E}\left[ E(x, t)^2 \right] = 
(p/4) \mathbb{E}\left( \ell(x)^2 \right).
\end{equation}
To proceed, we use the following statement:
\begin{lemma}\label{trace}
Let $a(x) = \langle A\,x, x \rangle$ be a quadratic form 
on $\rr^n$. Let us consider the 
unit hypercube in $\rr^p$, 
centered at $y = 0$, whose vertices have the $y$-coordinates: 
\[
\left( \pm \dfrac{1}{2}, \pm \dfrac{1}{2}, \ldots , \pm \dfrac{1}{2} \right), 
\]
and let $V$ denote the set of these vertices. 
In a uniform distribution on $V$, when each vertex 
has an equal chance of being selected, the expected 
value of $\langle A\,x, x \rangle$ is expressed as:
\[
\mathbb{E} \left[ \langle A\,x, x \rangle \right] = (1/4)\,\mathrm{tr}\,A.
\]
\end{lemma}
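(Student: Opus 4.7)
The plan is to expand the quadratic form coordinate-wise and exploit the fact that the $p$ coordinates of a vertex drawn uniformly from $V$ are independent random variables, each supported on $\{-1/2,+1/2\}$ with equal probability.

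First I would write
\[
a(x) = \langle A\,x,x\rangle = \sum_{i=0}^{p-1}\sum_{j=0}^{p-1} A_{ij}\,x_i\,x_j
\]
and take expectations term by term:
\[
\mathbb{E}\bigl[\langle A\,x,x\rangle\bigr] = \sum_{i,j} A_{ij}\,\mathbb{E}[x_i x_j].
\]
The computation then reduces to evaluating the second moments $\mathbb{E}[x_i x_j]$. Because the selection of each coordinate's sign is an independent fair coin flip, I have $\mathbb{E}[x_i]=0$ for every $i$, and hence $\mathbb{E}[x_i x_j]=\mathbb{E}[x_i]\,\mathbb{E}[x_j]=0$ whenever $i\neq j$. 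For $i=j$, since $x_i^2=1/4$ deterministically on $V$, I get $\mathbb{E}[x_i^2]=1/4$. Only the diagonal terms survive, yielding
\[
\mathbb{E}\bigl[\langle A\,x,x\rangle\bigr] = \sum_{i=0}^{p-1} A_{ii}\cdot\tfrac14 = \tfrac14\,\mathrm{tr}\,A.
\]

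There is really no obstacle here: the argument is a one-line moment computation, and the only thing one has to notice is that off-diagonal entries vanish in expectation by independence while the diagonal entries pick up the uniform value $1/4$ of $x_i^2$. Note that this argument does not require $A$ to be symmetric; the contribution of $A_{ij}+A_{ji}$ to $\langle A\,x,x\rangle$ for $i\neq j$ is irrelevant because $\mathbb{E}[x_i x_j]=0$. The lemma will then be applied (in conjunction with the formula for $\ell$ and the identity \eqref{moments-3}) to the symmetric quadratic form $\ell(x)^2$ to finish the variance calculation in Proposition \ref{var}.
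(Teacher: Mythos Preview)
Your proof is correct. It is, however, genuinely different from the paper's argument. The paper proceeds structurally: it introduces the reflection group $\mathcal{R}$ generated by the sign-flips $R_i\colon x_i\mapsto -x_i$, observes that $\mathcal{R}$ acts freely and transitively on $V$, and rewrites the expectation as $(1/4)\langle \overline{A}\,\mathbf{1},\mathbf{1}\rangle$ where $\overline{A}=|\mathcal{R}|^{-1}\sum_R R^{-1}AR$ is the group-averaged matrix. Since the associated quadratic form is $\mathcal{R}$-invariant it must be diagonal, so $\langle \overline{A}\,\mathbf{1},\mathbf{1}\rangle=\mathrm{tr}\,\overline{A}=\mathrm{tr}\,A$. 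Your route is the direct probabilistic one: expand $\langle Ax,x\rangle$ coordinate-wise and kill the off-diagonal terms via $\mathbb{E}[x_ix_j]=0$ for $i\neq j$. Your argument is shorter and more elementary; the paper's symmetry argument is a bit heavier but makes the underlying reason (invariance under the full reflection group forces only the trace to survive) more transparent and would generalize more readily to other symmetry groups acting on other vertex sets. Both are complete, and neither requires $A$ to be symmetric.
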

\begin{proof}
Let $\mathcal{R}$ be the group generated by the following reflections:
\[
R_i \colon \rr^p \to \rr^p,\quad 
R_i(x_0, \ldots, x_{i-1}, x_i, x_{i+1}, \ldots , x_{p-1} ) = 
(x_0, \ldots, x_{i-1}, -x_i, x_{i+1}, \ldots ,x_{p-1}).
\]
Since the action of $\mathcal{R}$ on $V$ 
is free and transitive, we get:
\begin{multline}
\mathbb{E} \left[ \langle A\,x, x \rangle \right] = 
|V|^{-1} \sum_{x \in V} \langle A\,x, x \rangle = 
(1/4)\,|\mathcal{R}|^{-1} \sum_{R \in \mathcal{R}} \langle A R\,\mathbf{1}, R\,\mathbf{1} \rangle = \\
= (1/4)\,|\mathcal{R}|^{-1} \sum_{R \in \mathcal{R}} \langle R^{-1} A R\,\mathbf{1}, \mathbf{1} \rangle = 
(1/4)\,\langle |\mathcal{R}|^{-1} \sum_{R \in \mathcal{R}} R^{-1} A R\,\mathbf{1}, \mathbf{1} \rangle = 
(1/4)\,\langle \overline{A}\,\mathbf{1}, \mathbf{1} \rangle.
\end{multline}
Since 
the quadratic form $\langle \overline{A}\, x, x \rangle$ is $\mathcal{R}$-invariant, we 
must have 
$\langle \overline{A}\, x, x \rangle = \lambda_0 x_0^2 + \ldots + \lambda_{p-1} x_{p-1}^2$ 
for some $\lambda_i \in \rr$. Consequently, 
$\langle \overline{A}\, \mathbf{1}, \mathbf{1} \rangle = \mathrm{tr}\, \overline{A}$. 
It remains to note that $\mathrm{tr}\, \overline{A} = \mathrm{tr}\, A$, and the lemma 
follows. \qed
\end{proof}
\smallskip%

We will apply Lemma \ref{trace} as follows: 
Set $e_0 = (1, 0, \ldots, 0)$ and $e_i = P^{i}\,e_0$. We calculate: 
\[
4\, \mathbb{E}\left( \ell(x)^2 \right) \stackrel{\eqref{l1}}{=} 
4\, \mathbb{E}\left( \ell(x - 0.5 \cdot \mathbf{1})^2 \right) = 
\sum_{i = 0}^{p-1} \ell(e_i)^2.
\]
The term on the right is 
precisely the trace of the 
quadratic form $\ell(x)^2$. It is easy to compute explicitly this sum:
\[
\sum_{i = 0}^{p-1} \ell(e_i)^2 = \dfrac{1}{3} \left( 1 - \dfrac{1}{2^p - 1} \right).
\]
Combining this with \eqref{moments-3}, 
we complete the proof of Proposition \ref{var}. \qed

\section{Proof of Proposition \ref{norm}} 
In the following discussion, we 
find it convenient to rescale the coordinate $x$ by a factor of $(2^p - 1)$. 
For each integer $x$ with a binary 
representation $x_{p-1} x_{p-2} \ldots x_{0}$, we map it to the interval $[0,1]$ as follows:
\[
x \to x (2^p-1)^{-1} = 0.(x_{p-1} x_{p-2} \ldots x_{0}).
\]
From now on, we use $x$ to denote a repeating binary fraction 
of period $p$ rather than the decimal value of $x_{p-1} x_{p-2}\ldots x_{0}$. 
We let $\Delta_p \subset [0,1]$ denote the set of repeating 
binary fractions of period $p$, i.e., the set of numbers:
\[
i (2^p - 1)^{-1}\quad \text{for $i = 0, 1, \ldots, 2^p - 1$.}
\]
We will now redefine $E_i(x)$ as functions operating on repeating binary fractions. 
Consider the function $f$ defined on $[0,1]$ as follows:
\[
f(x) = \begin{cases}
-x & \text{if } x < 0.5 \\
1 - x & \text{if } x \geq 0.5
\end{cases}.
\]
Then, we set:
\[
E_0(x) = f(x)\quad \text{for each $x \in \Delta_p$.}
\]
This definition agrees with equation \eqref{E0}. 
Next, let $T \colon [0,1] \to [0,1]$ be the doubling map, defined as:
\[
T(x) = 2\,x\,\mathrm{mod}\,1.
\]
For each $i = 1, \ldots, p - 1$, we set: 
\[
E_i(x) = f(T^i\,x) \quad \text{for each $x \in \Delta_p$.}
\]
This definition agrees with equation \eqref{Ei}. Then, $E(x, t)$ takes the form:
\[
E(x, t) = \sum_{i = 0}^{i = p - 1} t_i\,f(T^{i}\,x).
\]
We assume that $t_i$ are iid fair coins, and that $x$ is uniformly distributed 
on the set $\Delta_p$. Our goal is to demonstrate that the normalized sums: 
\[
(1/\sqrt{p}) (t_0\, f(x) + t_1\, f(T\,x) + \ldots + t_{p-1}\, f(T^{p-1}\,x))
\]
converge weakly to the normal distribution with zero mean. The variance 
of the limiting distribution is found with Proposition \ref{var}. 
Since the doubling map $T$ is ergodic (mixing, even), the 
convergence follows rather easily from the CLT for dynamical systems, 
as discussed, for instance, in \cite{Nagaev, Gouzel}. However, 
in contrast to the conventional CLT framework, 
each $f(T^i\,x)$ contributes to the overall sum only 
with a probability of $0.5$, the domain of the variable $x$ 
depends on the parameter $p$, and the function $f$ is not continuous. 
We shall address these issues one by one.
\smallskip%

It suffices to show that the corresponding characteristic functions: 
\[
\psi_p(\xi) = \mathbb{E}\left( \exp\left\{ (i\, \xi/\sqrt{p}) (t_0\, f(x) + t_1\, f(T\,x) + \ldots + t_{p-1}\, f(T^{p-1}\,x)) \right\} \right) 
\]
converge pointwise to the characteristic 
function of a normal distribution. 
Firstly, we eliminate the dependency on $t_i$. Since they are mutually independent 
and independent of $x$, we get:
\begin{multline*}
\psi_p(\xi) = \mathbb{E}\left( 
2^{-p} \left( 1 + \exp\left\{ i\, \xi\, f(x)/\sqrt{p} \right\} \right) 
\left( 1 + \exp\left\{ i\, \xi\, f(T\,x)/\sqrt{p}  \right\} \right) \ldots 
\left( 1 + \exp\left\{ i\, \xi\, f(T^{p-1}\,x)/\sqrt{p} \right\} \right)
\right) = \\
= 2^{-p}\sum_{x \in \Delta_p}
2^{-p} \left( 1 + \exp\left\{ i\, \xi\, f(x)/\sqrt{p} \right\} \right) 
\left( 1 + \exp\left\{ i\, \xi\, f(T\,x)/\sqrt{p}  \right\} \right) \ldots 
\left( 1 + \exp\left\{ i\, \xi\, f(T^{p-1}\,x)/\sqrt{p} \right\} \right).
\end{multline*}
Secondly, we show that 
when interested solely in the limit of $\psi_p$, 
one can assume that $x$ is uniformly distributed 
on the entire interval $[0,1]$. 
More precisely, we establish the following result:
\begin{lemma}
Letting 
\[
u_p(x) = 2^{-p} \left( 1 + \exp\left\{ i\, \xi\, f(x)/\sqrt{p} \right\} \right) 
\left( 1 + \exp\left\{ i\, \xi\, f(T\,x)/\sqrt{p}  \right\} \right) \ldots 
\left( 1 + \exp\left\{ i\, \xi\, f(T^{p-1}\,x)/\sqrt{p} \right\} \right),
\]
we get:
\[
\left| \int_{0}^{1} u_p - 2^{-p} \sum_{x \in \Delta_p} u_p \right| < C |\xi|/\sqrt{p}
\]
for some constant $C$ independent of $p$ and $\xi$.
\end{lemma}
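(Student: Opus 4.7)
The plan is to express the difference $\int_0^1 u_p - 2^{-p}\sum_{x \in \Delta_p} u_p(x)$ as a quadrature error and control it through the total variation $V(u_p)$ of $u_p$ on $[0,1]$. The key analytic estimate to establish is $V(u_p) = O(|\xi|\,2^p/\sqrt{p})$; combined with a mesh size of order $2^{-p}$ coming from the spacing of $\Delta_p$, the standard bounded-variation bound for Riemann sums will then yield the claimed $O(|\xi|/\sqrt{p})$.

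First I would bound the contribution to $V(u_p)$ from the smooth part. Writing $u_p(x) = 2^{-p}\prod_{i=0}^{p-1}(1 + e_i(x))$ with $e_i(x) = \exp\{i\,\xi\,f(T^i x)/\sqrt{p}\}$, and using $|f'|\leq 1$ a.e.\ together with $|(T^i)'| = 2^i$, one has $|e_i'(x)| \leq |\xi|\,2^i/\sqrt{p}$ wherever $e_i$ is smooth. Since $|1+e_i|\leq 2$, the product rule together with the prefactor $2^{-p}$ yields
\[
|u_p'(x)| \leq 2^{-p}\sum_{i=0}^{p-1} 2^{p-1}\cdot |\xi|\,2^i/\sqrt{p} \leq |\xi|\,2^p/\sqrt{p}
\]
away from discontinuities, and integrating contributes $O(|\xi|\,2^p/\sqrt{p})$ to $V(u_p)$.

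Next I would control the jump contribution. Since $f$ is discontinuous only at $1/2$, with jump of size $1$, $u_p$ can only fail to be continuous at points $x$ with $T^i x = 1/2$ for some $0 \leq i \leq p-1$; for each such $i$ there are exactly $2^i$ such $x$, totalling $2^p-1$. Because $T(1/2)=0$, the identity $T^i x = 1/2$ forces $T^j x = 0$ for all $j > i$, so at each discontinuity of $u_p$ exactly one factor $1+e_i$ truly jumps. That factor changes by at most $2|\sin(\xi/(2\sqrt{p}))| \leq |\xi|/\sqrt{p}$, while the remaining factors are bounded by $2$, so the jump of $u_p$ is at most $|\xi|/(2\sqrt{p})$. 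Summed over the $2^p-1$ discontinuities, this gives a total jump contribution of $O(|\xi|\,2^p/\sqrt{p})$, hence $V(u_p) = O(|\xi|\,2^p/\sqrt{p})$.

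Finally, I would close with the quadrature step. The points of $\Delta_p$ are $k/(2^p-1)$, $k = 0,\ldots,2^p-1$, with consecutive spacing $1/(2^p-1)$. Partitioning $[0,1]$ into $2^p$ subintervals of width $2^{-p}$, each containing one point of $\Delta_p$, the standard bounded-variation estimate for Riemann sums gives
\[
\left|\int_0^1 u_p - 2^{-p}\sum_{x \in \Delta_p} u_p(x)\right| \leq 2^{-p}\,V(u_p) + O(2^{-p}),
\]
where the $O(2^{-p})$ absorbs the boundary effect near $x=1$ and the small mismatch between the weights $2^{-p}$ and the true interval widths $1/(2^p-1)$ (using $|u_p|\leq 1$). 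This yields the desired bound $C|\xi|/\sqrt{p}$ with $C$ independent of $p$ and $\xi$. The main technical nuisance is the careful accounting of jumps, which is tamed by the observation that critical orbits of the doubling map collapse to $0$ one step after reaching $1/2$, so at each discontinuity of $u_p$ only one factor of the product is genuinely discontinuous.
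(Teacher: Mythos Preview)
Your argument is correct and reaches the same bound, but it takes a more laborious route than the paper. The paper observes that every discontinuity of $u_p$ lies in the set $\Sigma_p=\{j\cdot 2^{-p}:1\le j\le 2^p-1\}$, because $f\circ T^k$ is discontinuous only where $T^k x=1/2$, i.e.\ at dyadic rationals of denominator $2^{k+1}\le 2^p$. Hence on each subinterval $(j/2^p,(j+1)/2^p)$ the function $u_p$ is $C^1$ (indeed each $f\circ T^k$ is affine there), and since each such interval contains exactly one point of $\Delta_p$, the mean value inequality gives $|u_p(x)-u_p(x_j)|\le M\cdot 2^{-p}$ with $M=\sup|u_p'|$. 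Your derivative estimate $M\le |\xi|(2^p-1)/(2\sqrt{p})$ then yields the result directly. No jump bookkeeping is needed: the jumps sit exactly on the partition points and therefore never enter the per-piece error.

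Your detour through total variation works, but it forces you to control the jump contribution, and your justification there is slightly incomplete. From $T^i x=1/2$ you conclude $T^j x=0$ for $j>i$ and infer that the factor $1+e_j$ ``does not jump''; however, knowing the \emph{value} $T^j x=0$ does not by itself give continuity of $f\circ T^j$ at $x$. What is actually needed (and true) is that the one-sided limits of $T^j x$ lie in $\{0,1\}$ and $f(0^+)=f(1^-)=0$, so the one-sided limits of $f(T^j x)$ agree. With that fix your jump estimate is fine. Also, the extra $O(2^{-p})$ term you add for ``boundary effect and weight mismatch'' is not needed: the weights $2^{-p}$ exactly match the widths of your chosen subintervals, and $x_{2^p-1}=1$ sits on the right endpoint, so the standard BV quadrature bound $2^{-p}V(u_p)$ already covers everything. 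In short, your approach is more robust (it would survive a partition not aligned with the discontinuities), while the paper's approach is shorter precisely because it exploits that alignment.
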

\begin{proof}
Let $\Sigma_p \subset [0,1]$ be the set of numbers:
\[
j \cdot 2^{-p}\quad \text{for $j = 1, \ldots, 2^p - 1$.}
\]
The complement $[0,1] - \Sigma_p$ consists of $2^p$ open 
intervals (the leftmost and rightmost intervals are half-open), each containing 
precisely one point of $\Delta_p$. Specifically, we have
\[
x_i = \dfrac{i}{2^p - 1} \in \left[ \dfrac{i}{2^p}, \dfrac{i + 1}{2^p} \right]\quad 
\text{for each $i = 0, 1,\ldots, 2^p - 1$.}
\]
By construction, for each $k = 0, 1, \ldots, p - 1$, the function 
$f \circ T^k$ is differentiable on $[0,1] - \Sigma_p$. Moreover, for each 
$i = 0, 1, \ldots, 2^{p} - 1$, $f \circ T^k$ is a linear 
function on the interval $(i \cdot 2^{-p}, (i+1) 2^{-p})$. 
In particular, $f \circ T^k$ extends to the boundary points 
of that interval as a differentiable function. With this understood, we write: 
\begin{multline*}
\left|
\int_{0}^{1} u_p - 2^{-p} \sum_{x \in \Delta_p} u_p 
\right| = 
\left|
\sum_{i = 0}^{2^p - 1} \int_{i/2^p}^{(i+1)/2^p} u_p - 
2^{-p} \sum_{i = 0}^{2^p - 1} u_p\left(x_i\right) 
\right| = 
\left|
\sum_{i = 0}^{2^p - 1} \int_{i/2^p}^{(i+1)/2^p} 
\left( u_p(x) - u_p(x_i) \right)
\right| \leq \\
\leq 
\sum_{i = 0}^{2^p - 1} \int_{i/2^p}^{(i+1)/2^p} 
|u_p(x) - u_p(x_i)|
\leq
\sum_{i = 0}^{2^p - 1} \int_{i/2^p}^{(i+1)/2^p} 
M\,2^{-p} = M\,2^{-p},
\end{multline*}
where $M = \sup_{x \in [0,1] - \Sigma_p} |\partial_x\,u_p|$. 
We estimate $M$ as follows:
\[
\partial_x\,u_p = 
2^{-p}\sum_{i = 0}^{p-1} i\,\xi\,\partial_x\,(f \circ T^i)\, 
\exp\left\{ i\,\xi\,f \circ T^k \right\} \prod_{i \neq k} 
\left( 1 + \exp\left\{ i\,\xi\,f \circ T^i \right\} \right).
\]
As both $\xi$ and $f$ are real, we obtain:
\[
|\partial_x\,u_p| \leq 2^{-p} 
\left( |\partial_x\,f| + |\partial_x\,(f \circ T)| + 
\cdots + |\partial_x\,(f \circ T^{p-1})| \right) 2^{p-1} |\xi|.
\]
For each $x \notin \Sigma_p$, the derivative of $f(T^i\,x)$ equals 
$(-2^{i})$. Consequently, we have $M \leq |\xi|(2^p - 1)/2$, and 
\[
\left|
\int_{0}^{1} u_p - 2^{-p} \sum_{x \in \Delta_p} u_p 
\right| \leq |\xi| (1 - 2^{-p})/2 < |\xi|.
\]
This finishes the proof. \qed
\end{proof}
\smallskip%

It follows that pointwise limit of 
$\psi_p(\xi)$ is equal to that of the sequence:
\[
\varphi_p(\xi) = \int_{0}^{1} 2^{-p} \left( 1 + \exp\left\{ i\, \xi\, f(x)/\sqrt{p} \right\} \right) 
\left( 1 + \exp\left\{ i\, \xi\, f(T\,x)/\sqrt{p}  \right\} \right) \ldots 
\left( 1 + \exp\left\{ i\, \xi\, f(T^{p-1}\,x)/\sqrt{p} \right\} \right).
\]
The remaining part of the proof employs 
Nagaev's method, which is detailed 
in \cite{Nagaev, Gouzel}. 
For the sake of completeness, we will briefly outline the key steps of the method. Firstly, 
we introduce the transfer operator $L_0$ as follows: 
if $h$ is a sufficiently regular 
function on the interval $[0,1]$, 
possibly a bounded 
function with only finitely many discontinuities, then the function $L_0\,h$ is defined as:
\[
(L_0\,h)(x) = \sum_{y \in T^{-1}(x)} h(y)/|T^{-1}(x)| = 
\dfrac{1}{2} \left( h\left( \dfrac{x}{2}\right) + 
h\left( \dfrac{1}{2} + \dfrac{x}{2}\right) \right)\quad \text{for each $x \in (0,1)$},
\]
and then we extend $L_0\,h$ to $0$ and $1$ by continuity. 
This operator satisfies:
\begin{equation}\label{adjoint}
\int (L_0\,h) \cdot g = \int h \cdot (g \circ T)
\end{equation}
whenever both sides are defined. In particular, setting $g = 1$, we get:
\begin{equation}\label{meanL0}
\int L_0\,h = \int h.
\end{equation}
Next, let us 
define the twisted transfer operator $L_{\xi}$ as follows:
\begin{equation}\label{twisted}
L_{\xi}\,h = L_0\left( \dfrac{1}{2}(1 + \exp\left\{ i\,\xi\,f \right\}) h \right).
\end{equation}
Explicitly, using the identities:
\[
f\left( \dfrac{x}{2} \right) = -\dfrac{x}{2},\quad 
f\left( \dfrac{1}{2} + \dfrac{x}{2} \right) = 1 - \dfrac{x}{2}\quad 
\text{for each $x \in (0,1)$,}
\]
we can express $L_{\xi}\,h$ as:
\begin{equation}\label{explicit}
(L_{\xi}\,h)(x) = 
\dfrac{1}{4} \left( 
h\left( \dfrac{x}{2}\right) 
\left( 1 + \exp\left\{ i\,\xi\,\left( - \dfrac{x}{2} \right) \right\} \right) + 
h\left( \dfrac{1}{2} + \dfrac{x}{2}\right)
\left( 1 + \exp\left\{ i\,\xi\,\left( 1 - \dfrac{x}{2} \right) \right\} \right)
\right).
\end{equation}
We adopt this definition to ensure that Nagaev's identity holds true:
\[
\varphi_p(\xi) = \int 2^{-p} \left( 1 + \exp\left\{ i\, \xi\, f(x)/\sqrt{p} \right\} \right) 
\left( 1 + \exp\left\{ i\, \xi\, f(T\,x)/\sqrt{p}  \right\} \right) \ldots 
\left( 1 + \exp\left\{ i\, \xi\, f(T^{p-1}\,x)/\sqrt{p} \right\} \right) = 
\int L_{\xi}^p 1.
\]
This identity can be established by repeatedly applying \eqref{adjoint}. 
\smallskip%

For the next stage of the proof, 
we need to identify a suitable Banach space of functions, denoted as $\mathcal{B}$. 
In this space, $L_{\xi} \colon \mathcal{B} \to \mathcal{B}$ should 
be bounded for sufficiently small $\xi$, and 
the correspondence $\xi \to L_{\xi}$ should be smooth or, better yet, holomorphic. 
The function $1$ in $\mathcal{B}$ should be an 
eigenfunction for $L_0$ with the eigenvalue $1$ and that eigenvalue should be simple. 
Also, it should be ensured that all other eigenvalues of $L_0$ are contained within a disk in the 
complex plane with a radius strictly less than $1$.   
\smallskip%

One possible approach is 
to define $\mathcal{B}$ as the space of $C^1$-functions on $[0,1]$. 
Within $\mathcal{B}$, we can decompose functions into two subspaces, 
$\langle 1 \rangle$ and $\overline{B}$. Here, $\langle 1 \rangle$ represents the space of constant functions, and $\overline{B}$ is the space of functions with zero mean:
\[
\overline{B} = \left\{ h \in \mathcal{B}\ |\ \int h = 0  \right\}.
\]
The space $\langle 1 \rangle$ is an eigenspace 
for $L_0$ associated with the eigenvalue $1$, and this eigenvalue is indeed simple for $L_0$. 
Moreover, it follows form \eqref{meanL0} that $\overline{B}$ is an invariant 
subspace for $L_0$. One shows that $L_0 \colon \overline{B} \to \overline{B}$ 
acts as a contraction map with respect to the $C^1$-norm, 
confirming that the other eigenvalues do indeed lie witih disk of radius 
strictly less than $1$.
\smallskip%

That $L_{\xi}$ is a bounded operator on $\mathcal{B}$ does not 
immediately follow from its defining formula \eqref{twisted}. 
While $L_0$ is bounded, the multiplication operator 
$h \to (1 + \exp\left\{ i,\xi,f \right\}) h$ in \eqref{twisted} 
is not bounded due to the lack of continuity in $f$. 
Nevertheless, when this multiplication 
operator is combined with $L_0$, it yields a bounded operator, as shown 
by formula \eqref{explicit}. It is also 
apparent from \eqref{explicit} that the path $\xi \to L_{\xi}$ is analytic. 
\smallskip%

That the pointwise limit of 
\[
\int L_{\xi/\sqrt{p}}^p 1
\]
equals $\exp\left\{ - \sigma^2\,x^2 \right\}$ for some $\sigma^2 \geq 0$ is 
a result of Nagaev. For a detailed explanation, see, e.g, 
Theorem 2.4 in \cite{Gouzel}. The parameter $\sigma$ of the limiting distribution can be determined from Proposition \ref{var}. This finishes the proof. \qed

\smallskip
\section*{Acknowledgements} GS is supported by an SNSF Ambizione fellowship.

\smallskip

\bibliographystyle{plain}
\bibliography{ref}

\end{document}